




\documentclass{ecai} 



\usepackage{latexsym}
\usepackage{amssymb}
\usepackage{amsmath}
\usepackage{amsthm}
\usepackage{booktabs}
\usepackage{enumitem}
\usepackage{graphicx}
\usepackage{color}
\usepackage{natbib} 
\usepackage{mathtools} 
\usepackage{booktabs} 
\usepackage{tikz} 
\usepackage[ruled,vlined,linesnumbered]{algorithm2e}
\usepackage{graphicx,subcaption}
\usepackage{multirow}



\newtheorem{theorem}{Theorem}
\newtheorem{lemma}[theorem]{Lemma}



\newcommand{\BibTeX}{B\kern-.05em{\sc i\kern-.025em b}\kern-.08em\TeX}
\DeclareMathOperator*{\argmax}{arg\,max}


\begin{document}


\begin{frontmatter}


\paperid{3} 


\title{Cost-constrained multi-label group feature selection using shadow features}


\author[A]{\fnms{Tomasz}~\snm{Klonecki}}
\author[A,B]{\fnms{Pawe{\l }}~\snm{Teisseyre}}
\author[C]{\fnms{Jaesung}~\snm{Lee}} 

\address[A]{Institute of Computer Science - Polish Academy of Sciences}
\address[B]{Faculty of Mathematics and Information - Science Warsaw University of Technology}
\address[C]{Department of Artificial Intelligence - Chung-Ang University}


\begin{abstract}
  We consider the problem of feature selection in multi-label classification, considering the costs assigned to groups of features. In this task, the goal is to select a subset of features that will be useful for predicting the label vector, but at the same time, the cost associated with the selected features will not exceed the assumed budget. Solving the problem is of great importance in medicine, where we may be interested in predicting various diseases based on groups of features. The groups may be associated with parameters obtained from a certain diagnostic test, such as a blood test.   
  Because diagnostic test costs can be very high, considering cost information when selecting relevant features becomes crucial to reducing the cost of making predictions.
  We focus on the feature selection method based on information theory. The proposed method consists of two steps. First, we select features sequentially while maximizing conditional mutual information until the budget is exhausted. In the second step, we select additional cost-free features, i.e., those coming from groups that have already been used in previous steps. Limiting the number of added features is possible using the stop rule based on the concept of so-called shadow features, which are randomized counterparts of the original ones.
  In contrast to existing approaches based on penalized criteria, in our method, we avoid the need for computationally demanding optimization of the penalty parameter. Experiments conducted on the MIMIC medical database show the effectiveness of the method, especially when the assumed budget is limited.
\end{abstract}

\end{frontmatter}


\section{Introduction}
Feature selection (FS) is one of the most important techniques in multi-label learning \citep{Kashefetal2018, BOGATINOVSKI2022}. 
The selection of relevant features leads to parsimonious models that are easier to interpret, which is important for their explainability while reducing the computational time of model training \citep{Molnar2022}.
Additionally, eliminating redundant and noisy features allows users to avoid the effect of overfitting and obtain a model with better predictive power \citep{Hastieetal2009}.

In the field of medical applications, limiting the number of features allows clinicians and patients to reduce the cost of making predictions \citep{Xuetal2014}. This is particularly important when obtaining feature values involves certain costs, such as diagnostic test costs.
Studies show that hospitals spend significant costs performing unnecessary diagnostic tests \citep{Vegting2012}. Moreover, unnecessary diagnostic tests or treatments, such as diagnostic X-rays, may cause negative effects \citep{HallBrenner2008}.

Costs can be associated not only with individual features but also with entire groups of features. For example, suppose that we are interested in predicting the co-occurrence of various diseases in a patient, which can be represented as a multi-label classification task \citep{Zufferey2015}. The model can be based on diagnostic test results, clinical variables, and administrative data. By incurring the cost of a general blood or urine test, we gain access to a whole group of different parameters that serve as features in classification tasks. Similarly, data collected during a medical interview (e.g., information about past diseases or medications used) may constitute one group. Moreover, a group of features may consist of various statistics based on measurements of clinical parameters. For example, based on daily blood pressure measurement, statistics such as average/maximum value or number of blood pressure spikes can be calculated. 

In this work, we consider the cost-constrained group feature selection problem \citep{Pacliketal2002, KloneckiTeisseyreLeeMDAI2023}, in which the goal is to select a subset of features that the label vector is dependent on while limiting the budget to obtain feature values. Unlike most existing approaches \citep{BOLONCANEDO2014, ZHOU2016, jagdhuber2020cost, TeisseyreKlonecki2021, long2021cost}, we assume that costs are assigned to groups of features, i.e., by incurring the cost of a group, we gain access to all features belonging to a given group. To our knowledge, this problem has not yet been considered for multi-label classification.

We focus on model agnostic approaches based on information theory measures, such as Mutual Information (MI) and Conditional MI (CMI) \citep{Cover2006}. This approach is useful as a data pre-processing step performed before training the final classification model.
A natural approach employed in existing work is to use a Sequential Forward Selection (SFS), taking into account the penalty for the cost of the added feature \citep{BOLONCANEDO2014}.
In these methods, in each step, the algorithm adds a feature that is the most informative in the context of the features selected in the previous steps. The informativeness of a candidate feature is measured using CMI. Additionally, the algorithm subtracts a penalty for its cost from the CMI. The penalty can be the cost multiplied by the penalty parameter, which controls the balance between the importance of the candidate feature and its cost. 
In practice, the value of this parameter has a great impact on the selection of features and the performance of the classifier. Too small parameter value leads to omitting cost information and selecting too expensive features. On the other hand, if the value of the penalty parameter is too high, the cheapest features are selected, which do not have to be associated with the predicted labels.

In our work, we present an alternative approach to avoid the need to choose the penalty parameter.
The proposed method consists of two steps. In the first step, the algorithm uses the SFS scheme, ignoring the cost information. The algorithm chooses features until the assumed budget is exceeded. Note that among the remaining features, there are those whose cost is zero. These are the features of the groups that have been used in the previous steps. Therefore, in the second step, the algorithm selects additional features from among the zero-cost ones. To avoid including too many features and, consequently, overfitting, determining the moment of stopping becomes crucial. Our idea is to use so-called shadow variables generated by randomizing the original features \citep{Kursa2011}. Selecting the first shadow feature or some fraction of shadow features indicates that the algorithm should stop adding further ones.
Experiments conducted on the MIMIC medical database indicate that the proposed method performs effectively compared to methods based on the penalized SFS scheme and traditional FS methods. When the assumed budget is small, it achieves significantly greater accuracy than competing methods.

The paper is structured as follows. In Section 2, we discuss the most related methods. In Section 3, we formulate the problem of multi-label cost-constrained group feature selection based on the concepts of information theory, present the proposed approach, and the idea of using shadow features. Section 4 illustrates how the algorithm works and presents the results of experiments performed on a large clinical dataset MIMIC \citep{mimic2-dataset}.

\section{Related work}
SFS methods based on information theory are popular for selecting features for classification models \citep{Brown2012}. They are mainly based on CMI, which measures the usefulness of adding the candidate feature to previously selected ones. Because estimating CMI is challenging, most studies considered certain score functions that replace CMI, but are easier to estimate. 
An example is the popular CIFE criterion \citep{Lin2006}, which we obtain by using the M{\"o}bius representation for CMI \citep{Meyer2008} and then considering only the first two terms: MI between the class variable and the candidate feature and interaction information including the class variable, the candidate feature and one of the already selected features.
Another frequently used approach adopted in our work is to use the lower bound of the CMI. This leads to the Joint MI criterion \citep{Yang1999}, which contains CMI between the class variable and the candidate feature, and conditioning is performed on one of the features selected in the previous steps.

Many modifications have also been proposed for multi-label classification, among which \citep{LeeKim2012, LeeKim2017, seo2019generalized, zhang2019distinguishing} are representative examples.
Note, however, that the multi-label case is more interesting but also more demanding.
First, CMI estimation becomes more difficult due to the dimension of the label vector. Secondly, the issue of the existence of interactions plays an important role: in addition to interactions between multiple variables, there are also interactions involving multiple labels \citep{TEISSEYRELEE2023}.

Regarding cost-constrained FS, the approaches considered modify the traditional SFS methods by adding a penalty for the cost of the candidate variable \citep{BOLONCANEDO2014, TeisseyreKlonecki2021, KloneckiTeisseyreLeePR2023}. 
In addition to approaches based on information theory, modifications of traditional methods taking into account costs were considered, among which modification of random forest importance measures is a representative example \cite{ZHOU2016}.

In the methods mentioned above, the costs are assigned to individual features. 
For the problem where the cost is assigned on the basis of each feature group was considered 
in the works of \cite{Pacliketal2002, KloneckiTeisseyreLeeMDAI2023} where the former is related to the segmentation of backscatter images for product analysis and the latter is about the multi-morbidity prediction, respectively.
A common drawback of both approaches is that they are computationally intensive. In \cite{Pacliketal2002}, the performance on validation sets of a subsequent classifier is used as a scoring function. This requires learning the model several times, which can be prohibitive for large datasets. In \cite{KloneckiTeisseyreLeeMDAI2023}, optimizing the cost parameter requires running the FS algorithm repeatedly.
To address the above issues, we propose a method independent of the classification model and that does not require tuning additional penalty parameters. It uses shadow features obtained from the original features by random permutation of values. A similar idea has already been used in work on FS, e.g., \cite{Kursa2011} combined shadow features with selection methods based on random forests, while \cite{TEISSEYRELEE2023} proposed to separate relevant and irrelevant features using the threshold determined on the basis of permutation distribution.

Finally, let us note that the problem we are considering is different from the problem of group FS, which is studied extensively in the statistical literature (the group lasso method \citep{YuanLin2006} is an example).
In a group FS problem, the task is to select entire groups, while in the problem we are considering, the goal is to select individual features without exceeding a limited budget.

\section{Multi-label cost-constrained group feature selection}
\subsection{Problem statement}
Let $X=(X_1,\ldots,X_p)$ be a vector of features and $Y=(Y_1,\ldots,Y_q)$ be a target vector containing binary variables describing whether the given label is assigned to the instance. More precisely, $Y_i=1$ indicates that $i$-th label is relevant; for example, $i$-th disease is present in a given patient. Moreover, we denote by $F=\{1,\ldots,p\}$ the set of indices of the features. We assume that there are $K$ disjoint groups of features $G_1,\ldots,G_K$, such that $F=G_1\cup\ldots\cup G_K$ where $G_i \cap G_j = \emptyset$ $(i \neq j)$ and costs $c(G_1),\ldots,c(G_K)$ are associated with the groups. If we acquire the value of one feature of the group, then the values of the remaining features of the same group are obtained for free.  The cost associated with the given subset of features $S \subseteq F$ can be written as 
    \begin{equation}
        \label{Cost_group}
         c(S)=\sum_{k=1}^{K}c(G_k)I\{\exists j\in S: j\in G_k\}.
    \end{equation}

The basic measure that we use to measure the strength of the dependence between the label vector $Y$ and the feature vector $X$ is MI defined as $M(Y,X)=H(Y)-H(Y|X)$, where $H(Y), H(Y|X)$ are the entropy and the conditional entropy, respectively.   
Within the information-theoretic framework, the grouped cost-constrained FS problem can be stated as 
    \begin{equation}
        \label{Problem_cost_sensitive}
        S_{opt} = \argmax_{S:c(S)\leq B}M(Y,X_S),
    \end{equation} 
where $B$ is a user specified maximal admissible budget and $X_S$ is a vector corresponding to subset $S\subseteq F$. Intuitively, we try to find a set of features that is maximally dependent on the label vector such that the cost of the features used does not exceed the given budget $B$.

\subsection{Sequential Forward Selection (SFS) with cost penalty}
Solving (\ref{Problem_cost_sensitive}) requires an infeasible computational cost of an exhaustive search on feature subsets. 
Therefore, a natural method, adopted from several studies, is to use a sequential forward algorithm that, in each step, adds a candidate feature $X_k$, which is the most informative in the context of already selected features described in $S\subseteq F$. 
The CMI $M(X_k,Y|X_S)=H(Y|X_S)-H(Y|X_k,X_S)$ is a natural score function, which allows to assess the relevance of a candidate feature $X_k$ given the set $X_S$ of already selected features. The CMI is equal to the increase of MI, associated with adding a candidate feature $M(X_k,Y|X_S)=M(Y,X_{S\cup\{k\}})-M(Y,X_{S})$ \citep{Cover2006, Brown2012} and thus directly corresponds to MI in (\ref{Problem_cost_sensitive}). 

Recently, a general algorithm was proposed that considers the group structure of costs \citep{KloneckiTeisseyreLeeMDAI2023}, which works as follows. 
The algorithm starts from an empty set $S=\{\emptyset\}$ and in each step adds a candidate feature $S\leftarrow S\cup\{k_{\text{opt}}\}$ such that
\begin{equation}
    \label{Method1}
    k_{\text{opt}}=\argmax_{k\in F\setminus S}[M(X_k,Y|X_S)-\lambda c(k,S)],
\end{equation}
where $\lambda>0$ is a parameter and $c(k,S)$ is a penalty for the cost of adding feature $X_k$. Unlike the standard cost-constrained problem, the cost of adding $X_k$ is not constant. It depends on whether, in the previous steps, the algorithm selected features from the group to which the feature $X_k$ belongs. If the algorithm has already selected features from the considered group $G$, then the cost is zero. Otherwise, the selection of a feature incurs the cost assigned to the group, which can be written as
 \[
c(k,S)=
\begin{cases}
    0\text{ if } k\in G \textrm{ and } \exists j\in S: j\in G \\
    c(G) \text{ if } k\in G \textrm{ and } \nexists j\in S: j\in G. \\
\end{cases}
\]

The algorithm stops adding new features in (\ref{Method1}), when $c(S)>B$.
Parameter $\lambda>0$ controls a trade-off between feature relevance and the cost. Note that $\lambda=0$ corresponds to a traditional FS method that does not consider cost. In general, choosing the optimal value of $\lambda$ is a challenging task. For too small $\lambda$ values, the costs will be ignored, which may lead to the selection of a set of features that do not fit the assumed budget. On the other hand, for too high $\lambda$ values, the algorithm will ignore the relevance of the features and choose the cheapest. This can be a disadvantage if cheap features are not informative or redundant. Moreover, in this case, the method will tend to select features that come from groups from which any feature was selected, regardless of the significance of the included feature.
The previously considered methods of optimizing $\lambda$ require running the algorithm for different values from a certain grid $\lambda_{\min},\ldots,\lambda_{\max}$ and then choosing a value related to maximization of a certain criterion, such as a cumulative sum of relevance terms \citep{TeisseyreKlonecki2021, KloneckiTeisseyreLeePR2023}. Unfortunately, this approach is computationally expensive and becomes prohibitive if there is a certain constraint on the execution time.

\subsection{Proposed method}
To avoid the computationally demanding problem of $\lambda$ optimization, we propose a new, simple but effective approach based on the so-called shadow features.
The proposed method consists of two steps. In the first step, the algorithm performs a traditional FS process, i.e., method (\ref{Method1}) with $\lambda=0$, which yields a subset of features $S_1$. Importantly, among the remaining features $F\setminus S_1$, two subsets can be distinguished, which we denote by $U$ and $V$. The subset $U=\{j_1,\ldots,j_u\}$ of size $u$ contains features of zero cost, coming from groups of which at least one feature has already been selected. The second subset $V$ consists of features from groups that have not been used before, and therefore $V$ is no longer useful to the algorithm because adding any feature from the set $V$ to $S_1$ will exceed the assumed budget $B$.
Because the features in $U$ have zero cost, they may still be useful to improve the relevance of the selected feature subset on the target vectors.
The main idea of the algorithm is to add some fraction of features from set $U$ to the set $S_1$ to improve the quality of the final subset.

The natural way is to add $X_k$, $k\in U$ that maximizes $M(X_k,Y|X_{S_1})$ and continue adding the remaining features of $U$ analogously. However, the question arises when we should stop adding new features. The set $U$ may be large, and adding all the features of $U$ may lead to the inclusion of too many features and lead to overfitting of subsequent classification models.
To address this problem, we propose to use so-called shadow features to determine the stopping rule. Shadow features are generated by permuting values of the original features. We create shadow features $X_{j_1}^*,\ldots,X_{j_u}^*$ corresponding to features from the set $U$. Observe that the shadow features are independent of both $Y$ and $X$, and thus, they are irrelevant in predicting $Y$.
Moreover, we have the following fact, which formally justifies that the shadow feature is non-informative in the context of any subset of selected features. 
\begin{lemma}
\label{Lemma1}
Let $X_k^*$ be the shadow feature. Then we have $M(X_k^*,Y|X_S)=0$, for any $S\subseteq F$.
\end{lemma}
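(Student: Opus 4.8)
The plan is to reduce the claim to the standard fact that the conditional mutual information of two variables vanishes exactly when they are conditionally independent given the conditioning set, and then to verify that this conditional independence holds by the very construction of the shadow feature.

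First, I would make precise the property asserted just before the statement, namely that the shadow feature $X_k^*$, obtained by randomly permuting the values of $X_k$, is independent of the whole vector $(X,Y)$. In the population model this is the defining effect of the randomization: permuting destroys every dependence of $X_k$ on the remaining features and on the labels, while preserving its marginal law. In particular, $X_k^*$ is independent of $(Y,X_S)$ for every $S\subseteq F$, since $(Y,X_S)$ is a sub-vector of $(X,Y)$ and independence is inherited by sub-vectors.

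Second, from this joint independence I would derive the conditional independence $X_k^*\perp Y\mid X_S$. This is a routine factorization step: from $p(x_k^*,y,x_S)=p(x_k^*)\,p(y,x_S)$, marginalizing over $y$ gives $p(x_k^*\mid x_S)=p(x_k^*)$, whence $p(x_k^*,y\mid x_S)=p(x_k^*)\,p(y\mid x_S)=p(x_k^*\mid x_S)\,p(y\mid x_S)$, which is exactly conditional independence.

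Finally, I would invoke the identity $M(X_k^*,Y\mid X_S)=H(Y\mid X_S)-H(Y\mid X_k^*,X_S)$ together with the fact that conditional independence forces $H(Y\mid X_k^*,X_S)=H(Y\mid X_S)$; equivalently, one writes the CMI as the conditional Kullback--Leibler divergence between $p(x_k^*,y\mid x_S)$ and $p(x_k^*\mid x_S)\,p(y\mid x_S)$, which is zero precisely under the factorization just established. Since $S$ was arbitrary, this yields the claim. I do not expect any genuine mathematical obstacle here; the only point requiring care is stating the randomization property cleanly, that is, that $X_k^*$ is independent of the \emph{entire} vector $(X,Y)$ and not merely of $Y$, so that conditioning on any $X_S$ cannot create a spurious dependence.
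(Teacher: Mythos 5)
Your proof is correct and rests on the same key observation as the paper's: the shadow feature is independent of the entire vector $(Y,X)$, hence of any subvector $(Y,X_S)$, for every $S\subseteq F$. The only difference is cosmetic: you conclude by factorizing $p(x_k^*,y\mid x_S)=p(x_k^*\mid x_S)\,p(y\mid x_S)$ and invoking the characterization of vanishing CMI via conditional independence, whereas the paper applies the chain rule $M(X_k^*,(Y,X_S))=M(X_k^*,X_S)+M(X_k^*,Y\mid X_S)$ and the nonnegativity of both terms.
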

\begin{proof}
Note that $M(X_k^*,(Y,X_S))=0$ is equivalent to 
\begin{equation}
\label{L1_E1}
M(X_k^*,X_S)+M(X_k^*,Y|X_S),
\end{equation}
which follows from the chain rule for the MI \citep{Cover2006}.
By definition of the shadow feature, we have $M(X_k^*,(Y,X))=0$. Observe that independence $X_k^*\perp (Y,X)$ implies $X_k^*\perp (Y,X_S)$, for any subset $S\subseteq F$, which in turn implies $M(X_k^*,(Y,X_S))=0$. Hence, in view of (\ref{L1_E1}), it means $M(X_k^*,Y|X_S)=0$.
\end{proof}

The advantage of shadow features is that the marginal distribution remains unchanged, i.e., $P(X_{j_k}=x)=P(X_{j_k}^*=x)$. Thus, it can be regarded that the shadow features mimic the original features, but unlike the original features, they are always non-informative.
In the second step of our procedure, we add the subset $S_2\subseteq U$ to the feature indices set $S_1$. 
We initialize $S_2$ with $S_2=\emptyset$ and then, in each step, we add feature $S_2\leftarrow S_2\cup\{k_{\text{opt}}\}$ such that
    \begin{equation}
        \label{Method2}
        k_{\text{opt}}=\argmax_{k\in A\setminus S_2}M(X_k,Y|X_{S_1\cup S_2}).
    \end{equation}

In addition define
\[M_{\max}^*:=\max_{k\in A\setminus S_2}M(X_k^*,Y|X_{S_1\cup S_2}),\]
being the maximal CMI, computed for shadow features.
Obviously, in view of Lemma \ref{Lemma1}, the theoretical value of $M_{\max}^*$ is $0$, however in practice, we replace the MI by some estimator, computed on observed training data and thus the empirical counterpart of $M_{\max}^*$ can take non-zero values. We refer to the discussion in Section
\ref{Sec:Estimation of Conditional Mutual Information} on how to estimate MI and CMI.

We stop adding new features from $A$ when 
    \begin{equation}
        \label{StopRule}
        M_{\max}^*> M(X_{k_{\text{opt}}},Y|X_{S_1\cup S_2}),
    \end{equation}
which means that one of the shadow features is more informative in the context of already selected features than $X_{k_{\text{opt}}}$.
The stopping rule can be modified; for example, the algorithm may stop adding new features only if we choose a certain fraction (e.g. 5\%) of shadow features.
Algorithm~\ref{alg:alg1} describes the whole procedure.

\begin{algorithm}[!t]
            \caption{Proposed method}
            
            \DontPrintSemicolon
            \SetAlgoLined
            \SetNoFillComment
            \LinesNotNumbered 
            
            \SetKwInOut{Parameters}{Parameters}
            \SetKwInOut{Input}{Input}
            \SetKwInOut{Output}{Output}
            \SetKwInOut{Repeat}{Repeat}
            \SetKwInOut{Until}{until}
            
            \Input{$Y$, $X_1,\ldots,X_p$ $B$, $c(G_1),\ldots,c(G_K)$}

            \# Step 1:
            
            $S_1=\emptyset$ \#initialization

              \While{$c(S_1)\leq B$}{
            
            $k_{\text{opt}}=\arg\max_{k\in F\setminus S_1}M(X_k,Y|X_{S_1}),$
            
            $S_1\leftarrow S_1\cup\{k_{\text{opt}}\}$.
            
            }
            \# Step 2:

            $S_2:=\emptyset$ \#initialization
            
            $M_{\max}^*:=0$ \#initialization

            $U:=\{j_1,\ldots,j_u\}$ \# a set of zero-cost features

            \While{$U\setminus S_2\neq\emptyset$}{

                 $k_{\text{opt}}=\argmax_{k\in U\setminus {S_2}}M(X_k,Y|X_{S_1\cup S_2})$.

                $S_2\leftarrow S_2\cup\{k_{\text{opt}}\}$

                $M_{\max}^*:=\max_{k\in U\setminus S_2}M(X_k^*,Y|X_{S_1\cup S_2})$

                \If{$M_{\max}^*> M(X_{k_{\text{opt}}},Y|X_{S_1\cup S_2})$}{
                    {\bf break}
                }
            }

        $S_{\text{opt}}=S_1\cup S_2$

            \Output{$S_{\text{opt}}$}
            
            \label{alg:alg1}
        \end{algorithm}   

\subsection{From conditional mutual information to low-dimensional score functions}
\label{Sec:Estimation of Conditional Mutual Information}
The methods described above are very general and based on CMI. In practice, CMI estimation is demanding, especially for small training data, and in the case of large dimension of the set $S$ on which we condition and large dimension of $Y$  \citep{Paninski2003, Belghazietal2018}.
In the case of discrete variables, a popular method is to use plug-in estimators, which involve replacing the probabilities in the entropy definition with their empirical equivalents. However, this requires the estimation of multivariate probability distributions. In the case of continuous variables, the problem becomes even more difficult.
One possibility is to employ MI estimation methods using variational inference, such as the MINE (Mutual Information Network Estimation) \citep{Belghazietal2018} algorithm. This method is based on the so-called Donsker-Varadhan \citep{DonskerVaradhan1975} representation for MI and uses neural networks to optimize the appropriate risk function. The disadvantage of this approach is the significant computational cost, which makes it less useful in the context of fast, model-free selection methods.
Another possibility, used by many other authors \citep{Brown2012}, is to replace high-dimensional measures with lower-order terms, i.e., terms that depend on probabilities covering at most a few variables. This can be done using various techniques, such as lower MI limits or the use of the M{\"o}bius representation \citep{Meyer2008}.
Since the CMI is calculated multiple times in our approach, we decided to use an approach based on the lower bound of the MI. Specifically, 
$
M(Y,X_k|X_S)=M(Y,X_{S\cup\{k\}})-M(Y,X_{S})\propto M(Y,X_{S\cup\{k\}}), 
$
where the proportionality $\propto$ follows from the fact that $M(Y,X_{S})$ does not depend on the candidate feature $X_k$ and therefore can be omitted from the scoring function.
It is a well-known fact \citep{Cover2006} that conditioning on a larger number of features reduces entropy (information can’t hurt), which in turn implies that
\begin{eqnarray*}
&&
M(Y,X_{S\cup\{k\}})=H(Y)-H(Y|X_{S\cup\{k\}})
\cr
&&
\geq H(Y)-H(Y|X_{A})=M(Y,X_{A}),
\end{eqnarray*}
for any subset $A\subseteq S\cup\{k\}$.  
Similarly, we have
\begin{eqnarray*}
&&
M(Y,X_{S\cup\{k\}})=H(X_{S\cup\{k\}})-H(X_{S\cup\{k\}}|Y)
\cr
&&
\geq H(X_{S\cup\{k\}})-H(X_{S\cup\{k\}}|Y_B)=M(Y_B,X_{S\cup\{k\}}),
\end{eqnarray*}
where $B\subseteq\{1,\ldots,q\}$\footnote{Here, we use $B$ as an arbitrary set of index corresponding to labels $Y$. Please do not confuse this with the budget $B$.}.
Applying the above inequalities and averaging over all subsets $A,B$ of sizes $a$ and $b$, respectively, we obtain a lower bound on MI, 
\[
M(Y,X_{S\cup\{k\}})\geq \frac{1}{\binom{p}{a}}\frac{1}{\binom{q}{b}}\sum_{A\subseteq S\cup\{k\}}\sum_{B\subseteq \{1,\ldots,q\}}M(Y_B,X_{A}),
\]
where $X_A, Y_B$ are subvectors of $X,Y$, corresponding to sets $A,B$, respectively.
The above idea has already been used in related papers on FS (\cite{Meyer2008, KloneckiTeisseyreLeePR2023}).
The selection of appropriate values of $a$ and $b$ leads to criteria that are proportional to the lower limit ($>\propto $) of MI and can be used instead of CMI in formulas (\ref{Method1}), (\ref{Method2}) and (\ref{StopRule}). For example, for $a=1, b=2$ we obtain
\begin{eqnarray*}
&&
M(Y,X_{S\cup\{k\}})>\propto \sum_{l=1}^{q}\sum_{j\in S}M(Y_l,X_k|X_j)
\cr
&&
=\sum_{l=1}^{q}\sum_{j\in S}\underbrace{M(Y_l,X_k)}_{\text{(i)}}+\underbrace{II(Y_l,X_k,X_j)}_{\text{(ii)}}],
\end{eqnarray*}
which is a generalization of popular JMI criterion \citep{Yang1999} to the multi-label case.
Importantly, the term (i) corresponds to the marginal dependence between the label $Y_l$ and the candidate feature $X_k$. The second term (ii) contains interaction information (II) defined as $II(Y_l,X_k,X_j)=M(Y_l,X_k|X_j)-M(Y_l,X_k)$, which describes the strength of the interaction between features $X_k$ and $X_l$ in predicting the label $Y_l$ (\cite{Han1980, Lee2015mutual, MielniczukTeisseyre2018}). Including the second term is useful because it may happen that the feature $X_k$ is marginally independent of $Y_l$, i.e., $M(Y_l,X_k)=0$, but it interacts with feature $X_j$, i.e., $II(Y_l,X_k,X_j)>0$. The well-known example of such a situation is $Y_l=XOR(X_k,X_j)=I(X_k\neq X_j)$, where $X_j$ and $X_k$ are binary, independent features.
Moreover, $II$ can take negative values, indicating redundancy related to the feature $X_k$. For example, when $X_k=X_j$, then $II(Y_l,X_k,X_j)=-M(Y_l,X_k)<0$.

By choosing larger $a$ and $b$, we can account for higher-order interactions between features and labels. However, this involves increased computational costs and the estimation of higher-order terms. In experiments, we used $a=b=1$.
In practice, the choice of a and b should depend on the number of observations (the more observations, the more accurately we can estimate higher-order terms) and computational resources (the larger $a$ and $b$, the more components we need to estimate).

\section{Experiments}

\begin{table*}[t!]
    \centering
    \caption{Illustrative example - results.}
    \begin{tabular}{cccccc}
        \hline
        \multicolumn{1}{l}{} & \textbf{Method} & Step = 1 & Step = 2 & Step = 3 & Hamming loss\\ \hline
        \multirow{2}{*}{$B = 1$} & Traditional & $\{X_1\}$ & $-$ & $-$ & $0.383$\\
        & Proposed & $\{X_1\}$ & $\{X_1, X_2\}$ & $\{X_1, X_2, X_3\}$ & $0.257$ \\ 
        \hline
        \multirow{2}{*}{$B = 2$} & Traditional & $\{X_1\}$ & $\{X_1, X_4\}$ & $-$ & $0.302$ \\
        & Proposed & $\{X_1\}$ & $\{X_1, X_4\}$ & $\{X_1, X_3, X_4\}$ & $0.222$ \\ 
        \hline
        \multirow{2}{*}{$B = 3$} & Traditional & $\{X_1\}$ & $\{X_1, X_4\}$ & $\{X_1, X_4, X_5\}$ & $0.203$\\
        & Proposed & $\{X_1\}$ & $\{X_1, X_4\}$ & $\{X_1, X_4, X_5\}$ & $0.203$ \\ 
        \hline
    \end{tabular}
    \label{tab:ilustrative examples}
\end{table*}

The primary objective of the experiments was to evaluate the effectiveness of the proposed cost-constrained method. For comparison, we used methods based on SFS with a cost penalty $\lambda$, including the case of $\lambda=0$, corresponding to traditional FS. Moreover, we considered the cases $\lambda=\lambda_{\max}$ and $\lambda=0.5\cdot\lambda_{\max}$, where $\lambda_{\max}$ is chosen in such a way that in the first step, the feature with a lower cost is always selected before the feature with a higher cost, regardless of its relevance. Our goal is to show that a strategy based on selecting features with the lowest cost does not necessarily lead to high model accuracy. In turn, the value of $0.5\cdot\lambda_{\max}$ corresponds to an intermediate situation in which both the cost and the correlation with the label vector influence the selection of the candidate feature.

\subsection{Illustrative example}
To give a deeper insight into the method proposed in Algorithm~\ref{alg:alg1}, let us consider a simple illustrative example based on a synthetic dataset.
It shows in what situations we can expect that our method will work more effectively than the traditional FS method that does not take into account cost information.

The data generation process is shown in Figure \ref{fig:illustrative_schema}.
Specifically, consider five features $X_1, X_2, X_3, X_4$, and $X_5$, generated independently from the normal distribution $N(0,1)$. 
Additionally, there are three binary target variables, $Y_1, Y_2$, and $Y_3$, generated directly from $X_1, X_4$ and $X_5$ as $P(Y_1=1|X)=\sigma(3 X_1)$, $P(Y_2=1|X)=\sigma(2 X_4)$ and $P(Y_3=1|X)=\sigma(X_5)$, where $\sigma(s)=(1+\exp(-s))^{-1}$ represents a sigmoid activation function.
Furthermore, the features $X_2$ and $X_3$ are copies of $X_4$ and $X_5$, respectively, except that $\rho = 20\%$ of their values have been randomly permuted. This alteration makes $X_2$ and $X_3$ less informative than their counterparts. We consider the following feature groups: $G_1 = \{1, 2, 3 \}$, $G_2 =\{4 \}$ and $G_3 =\{5 \}$ and each group cost is equal to $1$. 
Figure~\ref{fig:illustrative_mi} shows the significance of individual features, measured as the MI of a given feature with a label, summed over all labels.
The information needed to effectively predict the label vector is contained in all groups, which requires a cost of three. On the other hand, the variables $X_2$ and $X_3$ from group $G_1$ can replace groups $G_2$ and $G_3$. This way, we can build a slightly less accurate model while incurring a much lower cost of one.

\begin{figure}[ht!]
    \begin{subfigure}[t]{\linewidth}
      \centering
      \includegraphics[width=\textwidth]{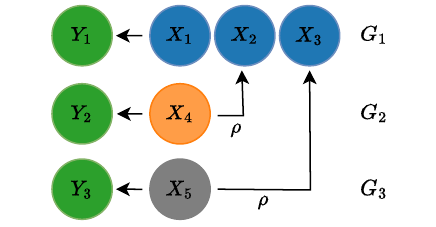}
      \caption{Data generation process.}
      \label{fig:illustrative_schema}
      \vspace*{0.4cm}
    \end{subfigure}
    \begin{subfigure}[t]{\linewidth}
      \centering
      \includegraphics[width=\textwidth]{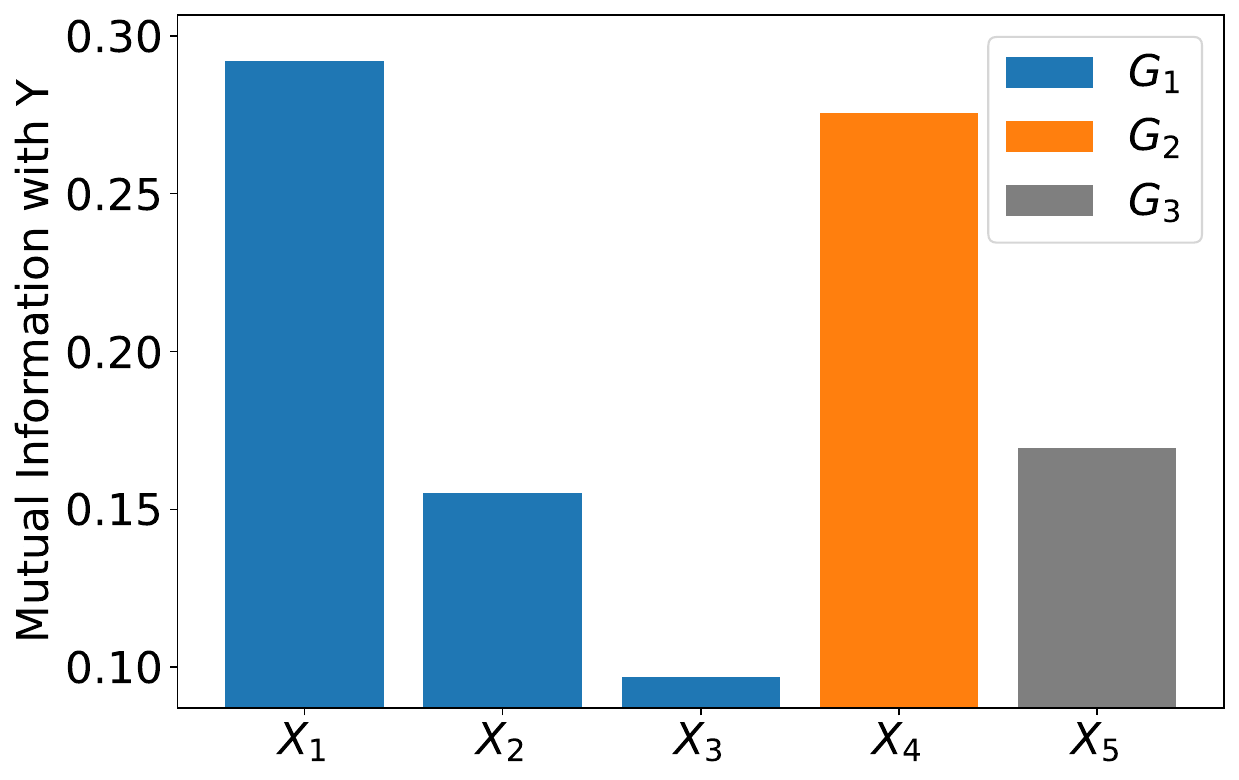}
      \caption{Relevance of the considered features.}
      \label{fig:illustrative_mi}
      \vspace*{0.4cm}
    \end{subfigure}
    \caption{Illustrative example.}
\end{figure}
\vspace*{0.5cm}

In Table~\ref{tab:ilustrative examples}, we present the features chosen in consecutive steps of the method, along with the Hamming loss of the specific model trained on the selected feature subset. 
The FS process was performed for three different budget values $B: 1,2,3$.
For $B=1$, the traditional method selects only one feature $X_1$, which is the most informative.
On the other hand, the proposed method selects all features of the group $G_1$, resulting in a lower Hamming loss. It should be noted that the feature $X_1$ is chosen in the first step, while the features $X_2,X_3$ are added in the second step as zero-cost features.
For a larger budget $B=2$, the traditional method selects two variables $X_1,X_4$, while the proposed method additionally includes $X_2$, which results in a reduction of the Hamming loss from $0.302$ to $0.222$ 
As expected, for the largest budget $B=3$, both methods select the same features.
In summary, the proposed method allows adding additional variables that do not increase the cost and improve the classification quality of the model.

\begin{table}
    \centering
    \caption{Data statistics.}
    \begin{tabular}{lc}
        \toprule
        Description & Values \\
        \midrule
        number of observations (patients)& $19,773$ \\
        number of features & $305$ \\
        number of feature groups & $87$ \\
        number of labels (diseases) & $10$ \\
        min/avg/max  diseases for a patient & $1/2.4/8$ \\
        min/avg/max size of the group & $1/3.466/9$ \\
        min/avg/max cost of the group & $1/7.73/53.5$ \\
    \bottomrule
\end{tabular}
\label{tab:mimic-summary-statistics2}
\end{table}

\begin{table}
    \caption{Feature groups.}
    \label{tab:groups}
    \centering
    \footnotesize
    \begin{tabular}{cccc}
        \toprule
        Group Name & Description & Cost & \# Features\\ 
        \midrule
        A & Administrative data & $1.0$ & $9$ \\
        NBP & Blood pressure & $3.0$ & $8$ \\
        RL & Right lung sounds & $9.0$ & $4$ \\
        UN & Urea nitrogen in serum & $12.0$ & $4$ \\
        VR & Patient verbal response & $2.0$ & $3$ \\
        HR & Heart rate & $1.5$ & $4$ \\
        P & Blood test (platalets) & $2.0$ & $4$\\
        \bottomrule
    \end{tabular}
\end{table}

\begin{table*}[!t]
\centering
\caption{MIMIC dataset - miscellaneous metrics.}
\label{tab:mimic_results_hamming_loss}
    \begin{tabular}{cccccc}
        \toprule
        \textbf{Budget} & \textbf{Metric} & Proposed & Traditional & \textbf{$SFS \: (\lambda_{\max})$} & \textbf{$SFS \: (0.5 \: \lambda_{\max})$} \\
        \midrule
        $10$ & Hamming Loss & $\textbf{0.214} \pm \textbf{0.001}$ & $-$ & $0.217 \pm 0.004$ & $0.217 \pm 0.004$ \\
        $25$ & & $\textbf{0.196} \pm \textbf{0.002}$ & $0.202 \pm 0.003$ & $0.212 \pm 0.002$ & $0.210 \pm 0.002$ \\
        $120$ & & $\textbf{0.191} \pm \textbf{0.001}$ & $0.191 \pm 0.003$ & $0.196 \pm 0.002$ & $0.194 \pm 0.001$ \\
        \midrule
        $10$ & Ranking Loss & $\textbf{0.228} \pm \textbf{0.004}$ & $-$ & $0.232 \pm 0.004$ & $0.232 \pm 0.003$ \\
        $25$ & & $\textbf{0.205} \pm \textbf{0.004}$ & $0.213 \pm 0.005$ & $0.218 \pm 0.004$ & $0.220 \pm 0.004$ \\
        $120$ & & $\textbf{0.195} \pm \textbf{0.003}$ & $0.195 \pm 0.004$ & $0.201 \pm 0.003$ & $0.200 \pm 0.002$ \\
        \midrule
        $10$ & Coverage Error & $\textbf{4.998} \pm \textbf{0.059}$ & $-$ & $5.033 \pm 0.039$ & $5.026 \pm 0.041$ \\
        $25$ & & $\textbf{4.811} \pm \textbf{0.066}$ & $4.889 \pm 0.069$ & $4.898 \pm 0.041$ & $4.914 \pm 0.044$ \\
        $120$ & & $\textbf{4.697} \pm \textbf{0.037}$ & $4.709 \pm 0.054$ & $4.797 \pm 0.087$ & $4.786 \pm 0.066$ \\
        \midrule
        $10$ & Zero One Loss & $\textbf{0.914} \pm \textbf{0.002}$ & $-$ & $0.918 \pm 0.007$ & $0.917 \pm 0.005$ \\
        $25$ & & $\textbf{0.895} \pm \textbf{0.011}$ & $0.908 \pm 0.011$ & $0.909 \pm 0.005$ & $0.908 \pm 0.003$ \\
        $120$ & & $0.885 \pm 0.002$ & $\textbf{0.884} \pm \textbf{0.005}$ & $0.889 \pm 0.006$ & $0.889 \pm 0.003$ \\
        \midrule
        $10$ & Accuracy & $\textbf{0.086} \pm \textbf{0.002}$ & $-$ & $0.082 \pm 0.007$ & $0.083 \pm 0.005$ \\
        $25$ & & $\textbf{0.105} \pm \textbf{0.011}$ & $0.092 \pm 0.011$ & $0.091 \pm 0.005$ & $0.092 \pm 0.003$ \\
        $120$ & & $0.115 \pm 0.002$ & $\textbf{0.116} \pm \textbf{0.005}$ & $0.111 \pm 0.006$ & $0.111 \pm 0.003$ \\
        \midrule
        $10$ & F1 (micro avg) & $\textbf{0.461} \pm \textbf{0.002}$ & $-$ & $0.448 \pm 0.004$ & $0.443 \pm 0.005$ \\
        $25$ & & $\textbf{0.516} \pm \textbf{0.008}$ & $0.494 \pm 0.014$ & $0.478 \pm 0.007$ & $0.475 \pm 0.005$ \\
        $120$ & & $0.531 \pm 0.003$ & $\textbf{0.535} \pm \textbf{0.006}$ & $0.526 \pm 0.009$ & $0.533 \pm 0.003$ \\
        \midrule
        $10$ & AUC (micro avg) & $\textbf{0.768} \pm \textbf{0.003}$ & $-$ & $0.763 \pm 0.003$ & $0.762 \pm 0.003$ \\
        $25$ & & $\textbf{0.792} \pm \textbf{0.003}$ & $0.782 \pm 0.005$ & $0.778 \pm 0.003$ & $0.774 \pm 0.003$ \\
        $120$ & & $\textbf{0.803} \pm \textbf{0.002}$ & $0.802 \pm 0.002$ & $0.795 \pm 0.003$ & $0.798 \pm 0.001$ \\
        \bottomrule
    \end{tabular}
\end{table*}

\begin{figure*}
    \centering
    \begin{tabular}{c c c}
      \includegraphics[width=0.33\textwidth]{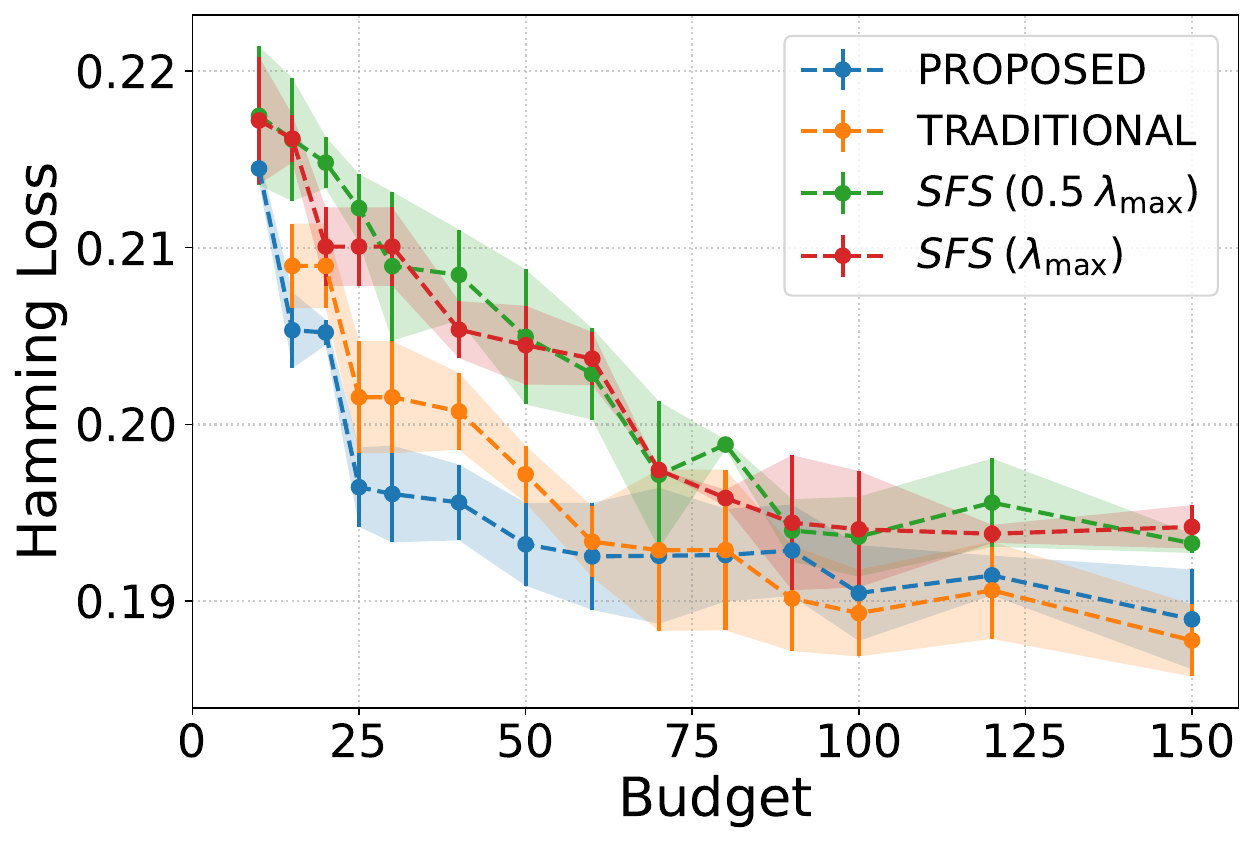}
      \includegraphics[width=0.33\textwidth]{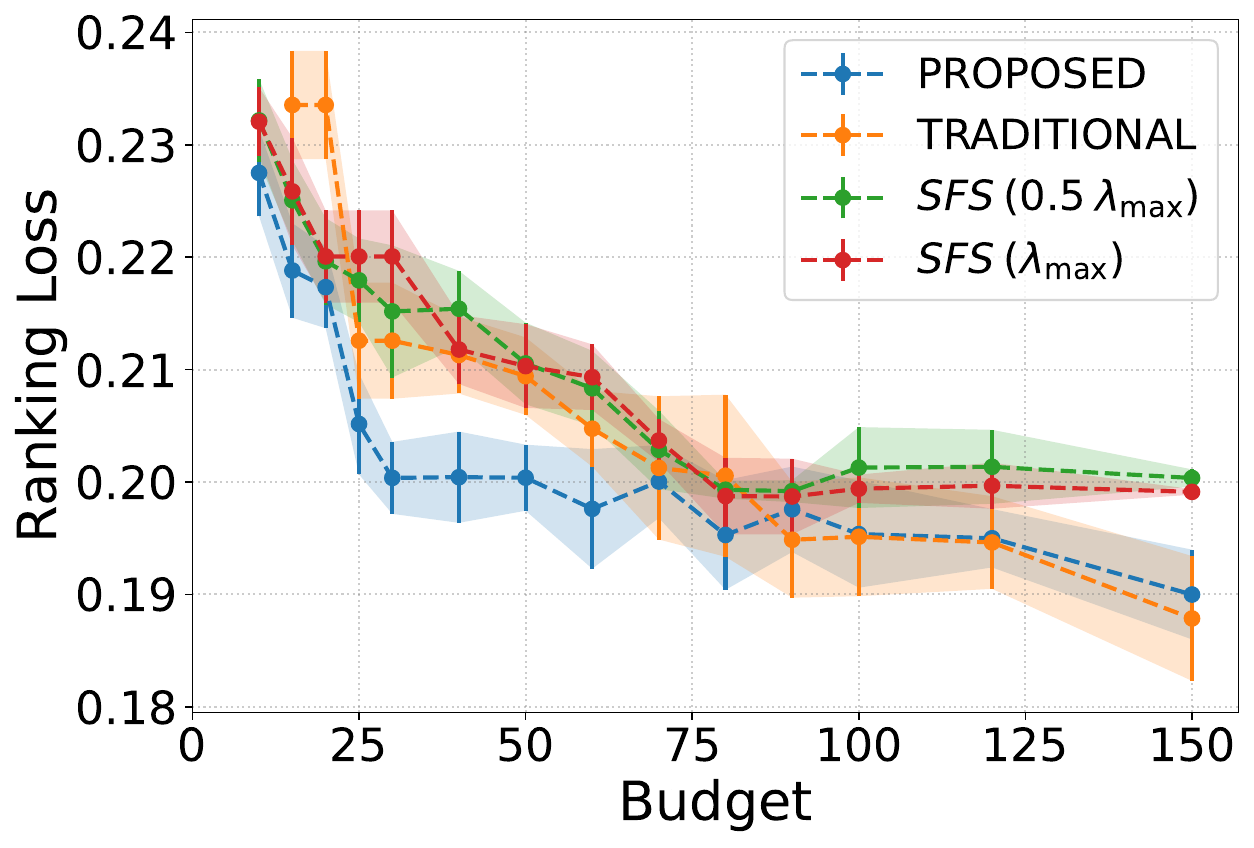}
      \includegraphics[width=0.33\textwidth]{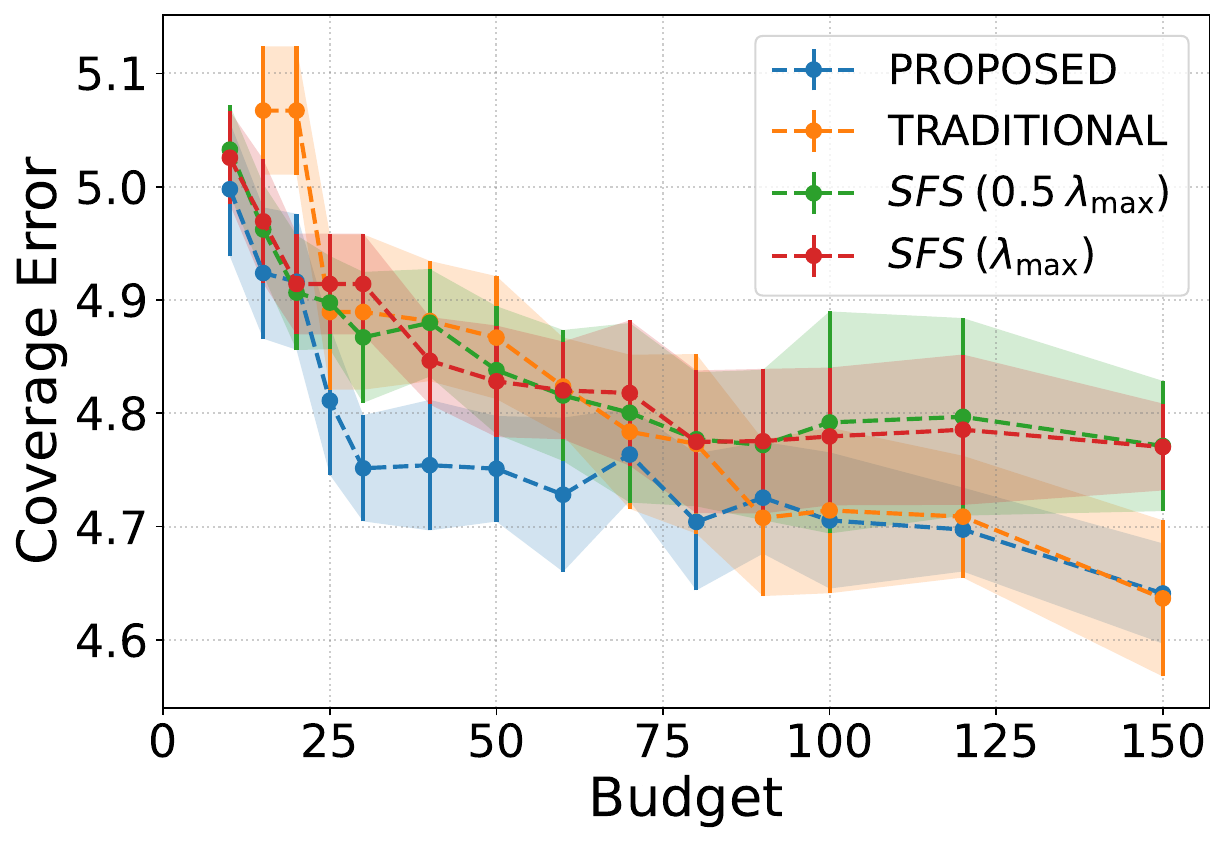}\\
      \includegraphics[width=0.33\textwidth]{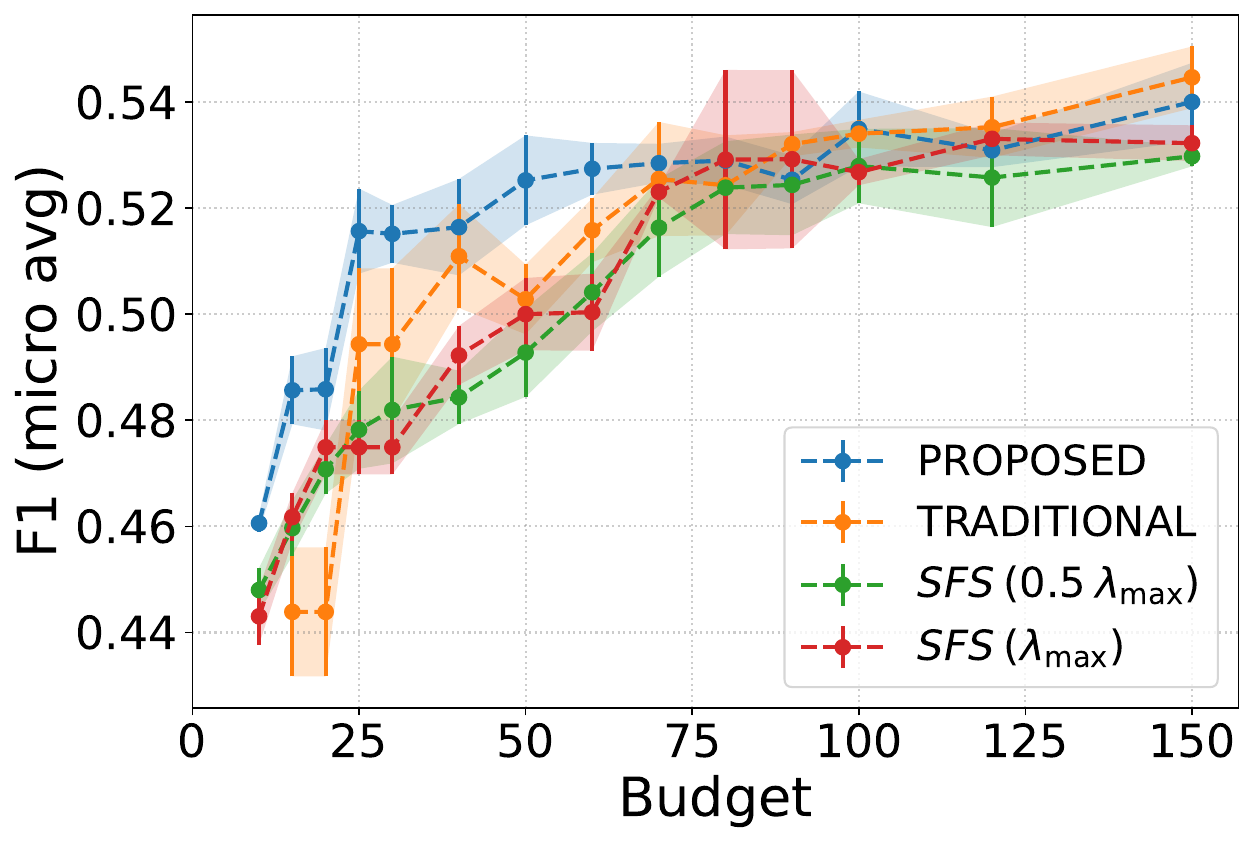}
      \includegraphics[width=0.33\textwidth]{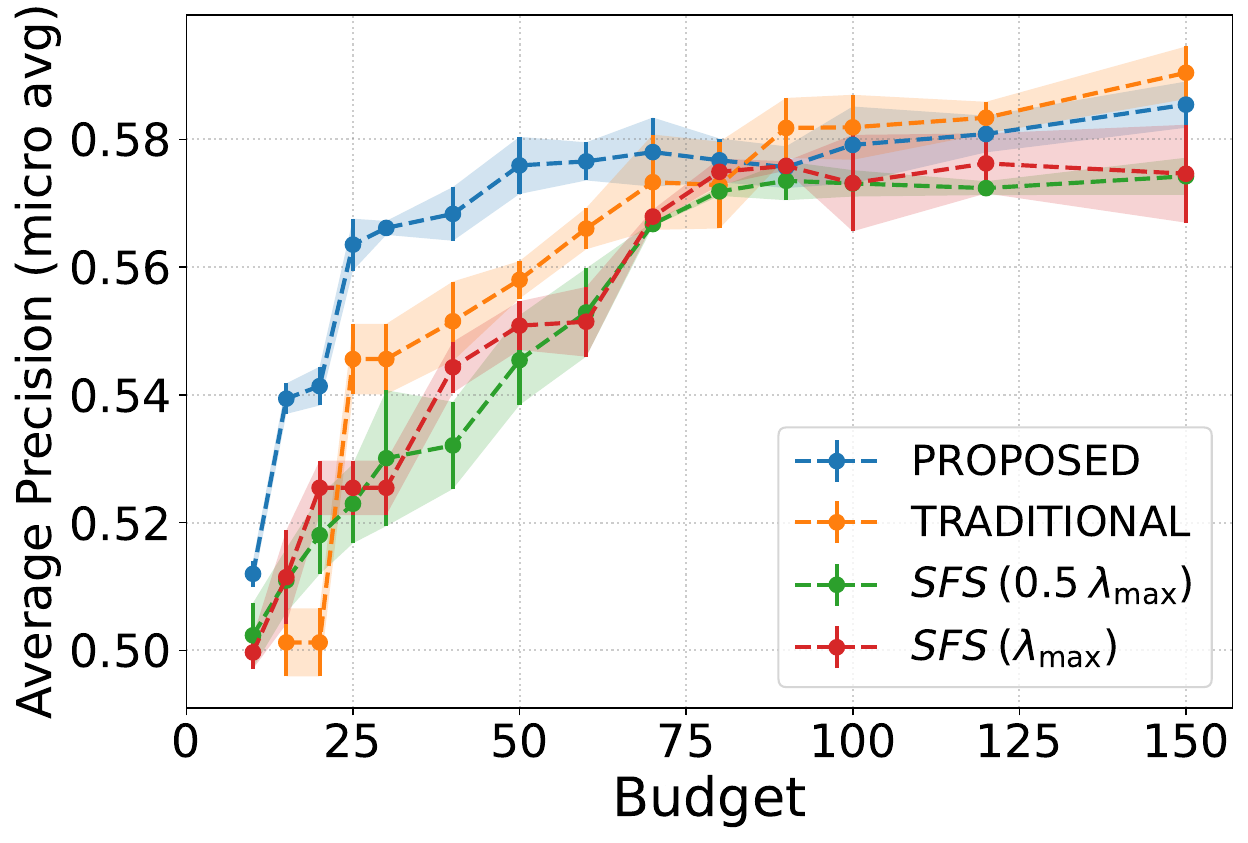}
      \includegraphics[width=0.33\textwidth]{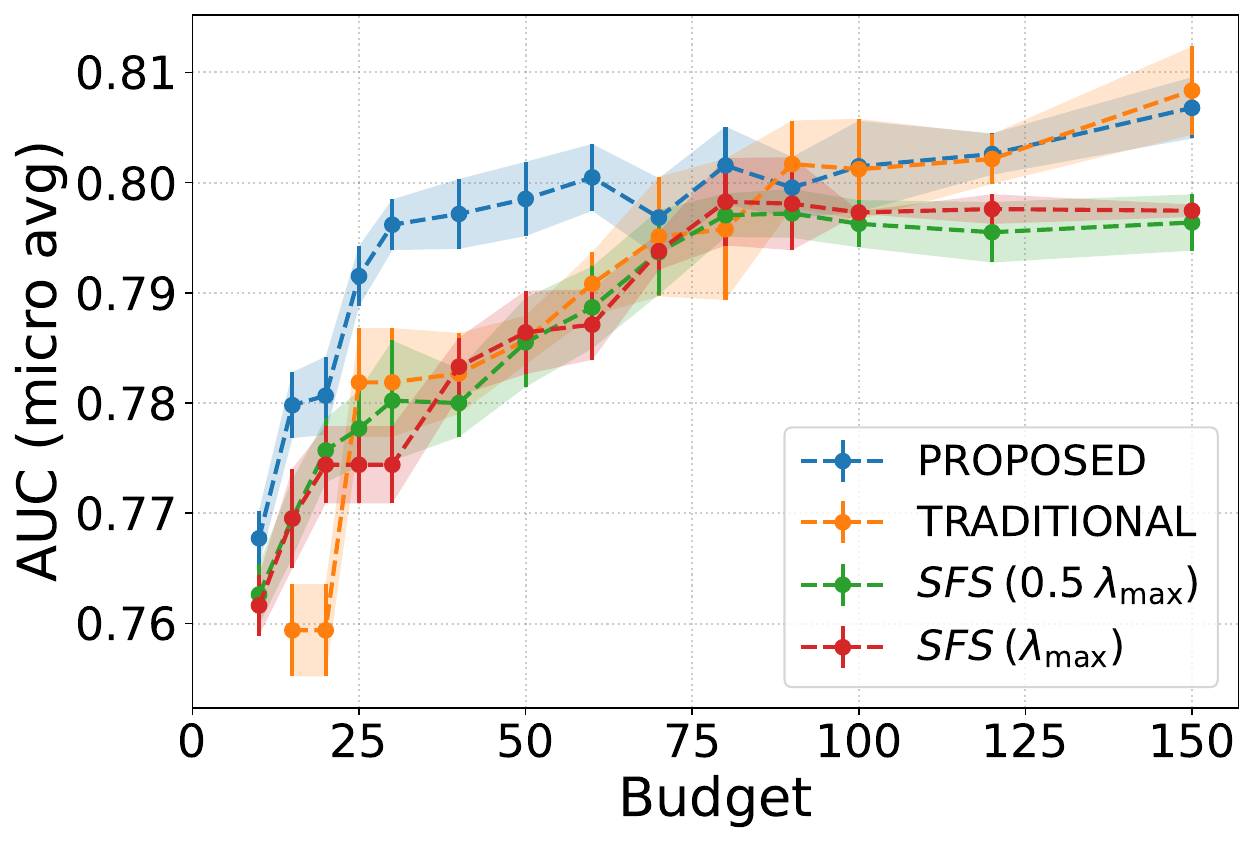}
    \end{tabular}
    \caption{Feature selection for the MIMIC dataset.}
    \label{fig:mimic2_results}
\end{figure*}

\subsection{Experiments on MIMIC dataset}
We performed experiments using the medical dataset MIMIC \citep{mimic2-dataset}, which includes data related to the medical status of patients admitted to intensive care units (ICUs). Each of the patients could be diagnosed with several diseases: hypertension ($65\%$ of the patients), diabetes ($31\%$), fluid ($31\%$), lipiod ($30\%$), kidney ($29\%$), copd ($22\%$), thyroid ($11\%$), hypotension ($10\%$), liver ($6\%$) and thrombosis ($5\%$). The features mainly represent the results of diagnostic tests. This dataset has been used in previous research, for a more comprehensive understanding of feature extraction and data cleaning procedures, we refer to \cite{Zufferey2015}.

The features and the cost of the MIMIC dataset are evaluated as assigned by domain experts in this field, respectively. A detailed list of these features and their respective costs can be found in \cite{Teisseyre2019}. In the present work, we expand on this methodology by categorizing features with similar characteristics into groups and assigning costs to these groups. Most of these groups combine four distinct statistics (mean, median, standard deviation, and range) of a particular medical parameter measured over a specific time frame. Another category, administrative data, includes fundamental patient information such as age, gender, or marital status, which can be obtained during a medical interview.

In Table~\ref{tab:mimic-summary-statistics2}, we present basic statistics that describe the dataset. We can see that the $305$ characteristics are grouped into $87$ groups. Furthermore, in Table~\ref{tab:groups}, we describe some of the most informative feature groups in detail. For example, administrative data is the cheapest group, with a cost of one and containing nine features. 
On the other hand, urine testing is one of the most expensive groups of features.

To evaluate the quality of the chosen feature subset, we use a multi-label $k$-Nearest Neighbors algorithm \citep{zhang2007ml}. Both FS and model training are performed on the training data ($80\%$), while the evaluation metrics are calculated on the validation set. We repeat the data split process five times to assess the variability of the results.

Figure~\ref{fig:mimic2_results} illustrates the selected evaluation measures of a model trained on feature sets chosen for different budgets. Additionally in Table \ref{tab:mimic_results_hamming_loss} we present the detailed values of various evaluation measures for budgets $B=10,25,120$.
Importantly, we see that the traditional method in the first steps selects features that do not allow them to fit into the low budget of $10$ (which is marked in the tables as '-').
The results indicate that too high value of the penalty parameter ($\lambda_{max}$) leads to the selection of cheap but not necessarily informative features, resulting in a lower prediction accuracy of the corresponding model.
When the budget is relatively low, below $75$, the proposed algorithm significantly outperforms other methods for every metric value. Moreover, as we move to a higher budget allocation, such as $100$, we observe that all methods present comparable performance, indicating that all informative features have already been selected. This shows that the proposed method can be recommended when the available budget is low.
\section{Conclusions}
In this work, we considered an unexplored problem of multi-label FS where the costs are assigned on the basis of feature groups. We proposed a novel two-step method that aims to reduce the total cost of the selected features. Using cost information in FS allows us to train a model with greater accuracy in a situation with a limited budget. Unlike SFS, there is no need to optimize the penalty parameter. Moreover, in SFS, the scale of cost values is important because costs are subtracted from the relevance score. In the proposed approach, we avoid this problem, which is an additional advantage.
Experiments indicate that the proposed method performs more effectively than a competing approach based on the SFS scheme. Importantly, the proposed method can be combined with different FS criteria. Future work may focus on using the described approach in other interesting medical applications where groups of features and costs assigned to them naturally arise.
Considering stopping methods other than those based on shadow features is also worth exploring.


\bibliography{mybibfile}

\end{document}